\newtheorem{theorem}{Theorem}
\newcommand{\tablepiclarge}[1]{\includegraphics[width=.25\linewidth,height=.25\linewidth,,trim={1.2cm .8cm .4cm .4cm},clip]{#1}}
\title{Regression with Uncertainty Quantification in Large Scale Complex Data}
\author{
    Anonymous
}
\author{
Nicholas Wilkins$^1$
\and
Michael Johnson$^1$
\and
Ifeoma Nwogu$^1$\and
\affiliations
$^1$Rochester Institute of Technology\\
\emails
\{npw3202, mxj5897, ionvcs\}@rit.edu,
}
\begin{document}

\maketitle
\begin{abstract}
While several methods for predicting uncertainty on deep networks have been recently proposed, they do not readily translate to large and complex datasets. In  this  paper we  utilize a simplified form of the Mixture Density Networks (MDNs) to produce a  one-shot approach to  quantify uncertainty in regression problems. We show that our uncertainty bounds are on-par or  better  than  other  reported existing  methods.  When applied to standard regression benchmark datasets, we show an improvement in predictive log-likelihood and root-mean-square-error when compared to existing state-of-the-art methods. We also demonstrate this method's efficacy on stochastic, highly volatile time-series data where stock prices are predicted for the next time interval. The resulting uncertainty graph summarizes significant anomalies in the stock price chart. Furthermore, we apply this method to the task of age estimation from the challenging IMDb-Wiki dataset of half a million face images. We successfully predict the uncertainties associated with the prediction and empirically analyze the underlying causes of the uncertainties. This uncertainty quantification can be used to pre-process low quality datasets and further enable learning.
\end{abstract}

\section{Introduction}
When utilizing regression under deep learning, one typically attempts to learn an optimal mapping (under some loss function) from a feature space (notated here as $X$) to some target space (notated here as $Y$): $\hat{f}: X \rightarrow Y$. We wish to learn that function such that some loss function $\mathcal{L}$ is minimized. Due to the aforementioned construction, typically when utilizing a regressor, a single value is predicted. Frequently, however, the feature space does not adequately capture enough information to perfectly predict $Y$. Additionally, there is frequently stochasticity present within the system (i.e. aleatoric uncertainty) which prevents any system from perfectly predicting onto $Y$. The main focus of this work is to construct such a regressor which efficiently regresses onto a normal distribution on the target space. The choice of a Gaussian as the target distribution is in standing with traditional statistics methods where when the measurement errors occurring in regression problems are assumed to follow a normal distribution. 

Deep Neural Networks have transformed the field of machine learning by allowing advanced concepts to be learned from large-scale input data. Furthermore, these techniques have allowed for recent breakthroughs in pattern recognition that have applications in many fields such as chemistry, biology, physics, manufacturing, and medical sciences. 
Although promising and highly useful, many deep learning techniques only provide a point estimate and seldom provide a means to understand inherent uncertainty in the input data. Due to this, they are frequently incapable of understanding their own limitations\footnote{Although in classification one can determine how far training samples are from the decision boundary, this confidence is often significantly different from understanding the inherent limitations of the learning system. This can potentially have disastrous impacts in many important real life scenarios}. 

For many areas of scientific study, especially in areas of critical importance such as in medical image analysis for patient diagnosis, this lack of uncertainty quantification is highly problematic. The inability to understand and quantify the model's confidence in its predicted values is a high source of potential risk and liability \cite{Medical_Imaging}. For example, when faced with a difficult diagnosis, the ability for a deep learning system to report large uncertainties would allow for human operators to intervene and review those specific cases. \emph{If deep learning is to be widely used for critical applications in practical settings, such as making medical diagnosis from input data, a key requirement would be the ability to provide \underline{statistically meaningful} uncertainty measurements with their predictions}.

In his 1994 Ph.D. thesis, Bishop \cite{Bishop94} introduced Mixture Density Networks (MDNs); where a neural network is used to predict a probability distribution over the target value $Y$, rather than a single point estimate. The MDNs train with a fixed number of mixtures of Gaussian components over the course of the training scheme using the Negative Log Likelihood (NLL) as the loss function to the network. This proposed training scheme has the potential to address many of the issues highlighted above, but due to limitations in computing power in the 1990's, MDNs did not gain as wide popularity.

In this paper we present a elegant and simplified approach to quantify uncertainty in large-scale regression problems. We propose a one-shot approach requiring no significant overhead. Additionally, we demonstrate the uncertainty bounds produced by this system are on-par or better than currently existing methods reported in the literature. Finally, we illustrate how this can be utilized for cleaning datasets and removing erroneous data autonomously. 


\section{Prior Work}\label{sec:priorwork}
As uncertainty measurement is immensely useful, prior research has been conducted into quantifying uncertainty. However, many proposed methods require substantial overhead. Additionally, although many past works have made the distinction, our calculated uncertainty is capable of incorporating both aleatory and epistemic uncertainties\cite{uncert}. \footnote{Aleatory Uncertainty is irreducible uncertainty due to probabilistic variability while epistemic uncertainty is reducible and stems from uncertainty in the system model.}

\paragraph{\sc{Separate Regressor}}
One popular method for quantifying uncertainty is to regress directly on the uncertainty\cite{sepred}. Typically, two regressors are utilized: a \textit{value} regressor and an \textit{uncertainty} regressor. These work separately to predict their respective values. This method requires the uncertainty regressor to learn the specifications of the value regressor. Additionally, the training schedule must be carefully designed to ensure that both regressors learn in tandem. Furthermore, it is much easier (due to the complexity of simultaneously optimizing two systems) for this system to get stuck in a local optimum. 

In our proposed approach we utilize a single network, thereby allowing for various components of the value regressor to interact with components of the uncertainty regressor (and vice versa). This reduces the computational overhead introduced by having two separate regressors. Furthermore, as we are only training a single network, our training schedule is significantly less complex. 

\paragraph{\sc{Deep Gaussian Process}}
Deep Gaussian processes are a class of models utilized for regression that combine Gaussian processes (GPs) with deep architectures. These were initially introduced by \cite{Deep_gaussian}. Deep GP is a composition of GP's where each layer $l$ consists of $D^{(l)}$ GP units that connect it to the next layer. Imagine a neural network with one or more hidden nodes and each edge connecting any two nodes in the network is a GP.  Exact inference on deep GPs is intractable, and although several variational approximation methods have been proposed, they are difficult to implement and do not extend readily to arbitrary kernels. They can be used to perform a regression with uncertainty bounds through a technique known as Kriging, but struggle with high dimensional or large-scale data. 

Because our proposed approach does not bear the significant overhead burden when training, it is capable of regressing on high-dimensional, large scale image data and provides uncertainty measures on the predictions made.

\paragraph{\sc{MC Dropout}}
MC Dropout\cite{MCDrop}, a method aimed at replicating the behavior of a Deep Gaussian process, can also be used for uncertainty quantification. However, utilizing this method requires ensembling on the network, leading to multiple evaluations of the network, which results in additional computational expense. 

As we regress directly on the distribution parameters, we do not need to sample from our network thus making the network substantially faster (as we only need to make one forward pass through our network).

\paragraph{\sc{Ensemble Methods}}
Recently there has been research into utilizing Deep Ensembles\cite{Deep_ensemble} (an ensemble of deep learners) to create multiple hypotheses. From these hypotheses, an uncertainty can be inferred. While extremely promising, utilizing this method requires one to train multiple deep learners and evaluate multiple deep networks to generate uncertainty resulting in a fairly computationally expensive process. We avoid these issues by utilizing a single regressor. 

Since we are only utilizing a single regressor, we only need to train, evaluate, and store one regressor.

\paragraph{\sc{Bayesian Regression}}
Drawing inspiration from statistics and probability theory, Bayesian regression assigns each parameter a prior probability distribution. Bayesian learning, via the Bayesian update rule, is utilized to update the probability distributions to best fit the data. One can extend Bayesian Regression to neural networks\cite{darkKnowledge}\cite{BayesBackprop}\cite{ProbBackprop} utilizing a similar methodology to produce uncertainty. As before, this requires storing and optimizing a distribution for each parameter, and thus is computationally expensive. Additionally, to determine a hypotheses and uncertainty, one must sample the network utilizing techniques such as variations inference\cite{VI} or MCMC\cite{MCMC}. 

We are performing ordinary regression on the distribution parameters; each of our weights and biases take on a single value. Thus, we do not need to sample from our network.  This enables us to use our method on large scale imaging datasets.

%

\section{Methodology}
\subsection{Framing the Problem}
\label{subsec:framing_the_problem}
Suppose we have samples $\left\{(x_i, y_i)\right\}_{i = 1}^N \sim \mathcal{D}(X,Y)$ where $\mathcal{D}(X,Y)$ is a joint probability distribution of $X$ and $Y$. For this paper, we assume that 
\begin{align}
\mathcal{D}(X,Y)_{X = x_0} = \mathcal{N}(\mu_{x_0}, \Sigma_{x_0}).
\end{align} 
That is to say that each cross section of the joint probability distribution function (PDF) degenerates into a normal distribution. Furthermore, we assume that each output dimension is conditionally independent from each other. Thus, for all $x_0$, $\Sigma_{x_0}$ is a diagonal matrix. 

We wish to learn a mapping from $X$ to the parameters of a Gaussian: 
\begin{align}
\phi: X \rightarrow \mathbb{R} \times (\mathbb{R}^+-\{0\})
\end{align}
where $\phi(x_0) = (\mu_{x_0},\sigma_{x_0}) $
so that 
\begin{align}
\mathcal{D}(X, Y)_{X = x_0} \stackrel{d}{=} \mathcal{N}(\phi(x_0))
\end{align}
(or alternatively the KL Divergence is minimized). Utilizing this mapping, one can determine the uncertainty (both epistemic and aleatory) of our model. We demonstrate the capability of capturing both of these uncertainties in the experiments section. 

Thus, as our target distributions are Gaussian, by producing a distribution on target variable, one can produce confidence intervals on the target variable. If one can determine such a mapping, one can achieve the uncertainty quantification described above.

\subsection{Approach}
To learn the mapping described in Section \ref{subsec:framing_the_problem}, we train a regressor to output the parameters of our target distribution with the following log-likelihood loss: 
\begin{align}
\mathcal{L} = -\iint\limits_S \log(\rho_{X}(Y)) p(X,Y) dS
\end{align}
where $p(X,Y)$ is the true joint distribution on $X,Y$ and $\rho_x$ is a Gaussian induced with parameters from the regressor. In the appendix we demonstrate that an optimal learning scheme (with appropriate assumptions) will converge to the true distribution parameters assuming the target distribution is a Gaussian. Thus, a scheme which is optimal under this loss will also have a minimal mean squared error (or any other loss) to the target data points.

This loss under finite data degenerates into NLL loss:
\begin{align}
\mathcal{L} &= -\sum_{x \in X}\sum_{y \in Y} \log(\rho_{x}(y)) f(x,y)\\
           &= -\sum_{i = 1}^N \log(\rho_{x_i}(y_i))
\end{align}
where $f(x,y)$ is the frequency $(x,y)$ occurs in the dataset. As our target distribution is a Gaussian\footnote{for this simplification, please note that $\log$ is base $e$},
\begin{align}
\mathcal{L} &= -\sum_{i = 1}^N \log\left(\frac{1}{\sqrt{2\pi\sigma_{x_i}^2}}e^{-\frac{(y_i-\mu_{x_i})^2}{2\sigma_{x_i}^2}}\right)\\
&=\frac{1}{2}\sum_{i = 1}^N \left(\left(\frac{y_i-\mu_{x_i}}{\sigma_{x_i}}\right)^2+ \log(2\pi)+2\log(\sigma_{x_i})\right)
\end{align}
which will reach its minimum where
\begin{align}
\mathcal{L}^* &= \sum_{i = 1}^N \left(\frac{y_i-\mu_{x_i}}{\sigma_{x_i}}\right)^2+2\sum_{i = 1}^N\log(\sigma_{x_i})
\end{align}
reaches its minimum. This loss is preferable over a multitude of other losses (such as KL Divergence) as it does not require defining an auxiliary ground truth probability distribution.

\subsection{Network Architecture}
We model this regressor utilizing a multi-component neural network, which must output two values (assuming we are regressing on a single target variable): the mean and standard deviation (which can also be interpreted as an uncertainty). Figure \ref{fig:network_arch} shows an overview of the generalized architecture with the various components. When applied on high-dimensional numerical input data, the feature extractor can be implemented as a deep neural network which embeds the data into a lower dimensional space; when applied on time series data, the feature extractor can be implemented as some variant of a recursive neural network (RNN); and when applied on complex natural images, a convolutional neural network can used for feature extraction. 

We typically utilize two fully connected layers as the regressor layer, where the number of nodes is determined by the complexity of the output of the feature extraction layer.  The regressor network produces the mean and the standard deviation, and \emph{Softplus} is applied to the standard deviation to ensure a valid probability distribution is generated. We demonstrate the efficacy of the architecture on different data types in Section \ref{sec:expts}.
\begin{figure}
	\centering
	\includegraphics[width = 0.5\linewidth]{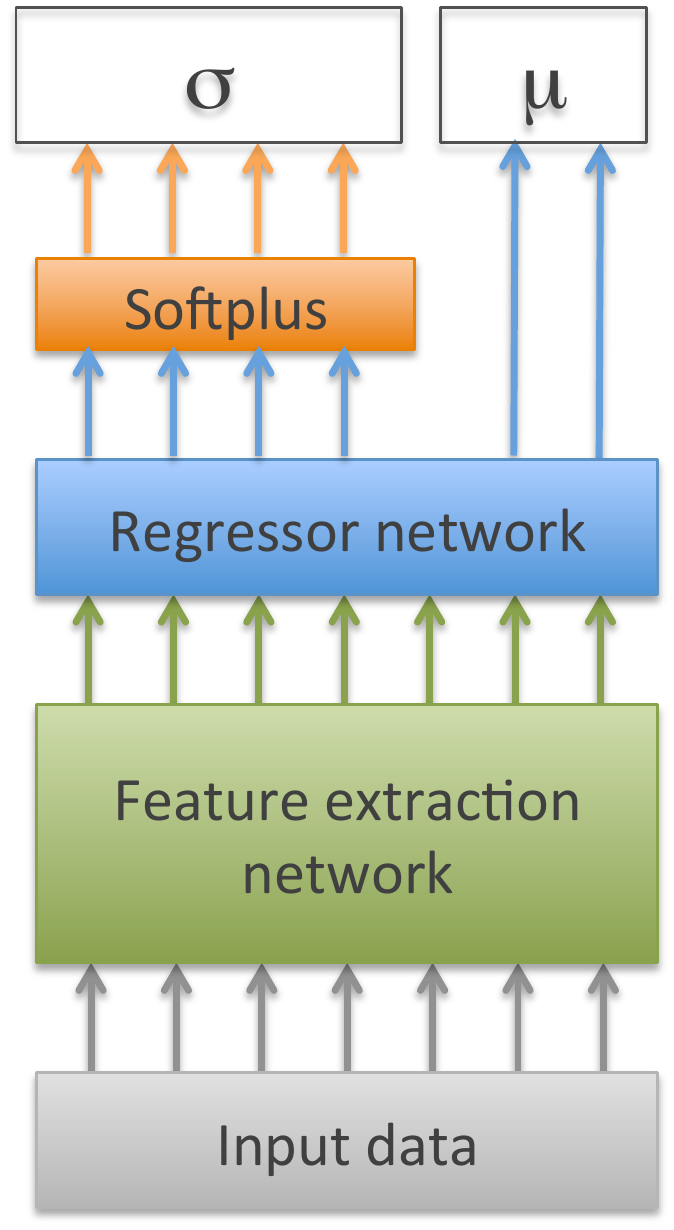}
	\caption{General architecture for the regressor.}
	\label{fig:network_arch}
\end{figure}

\section{Experiments and Results}\label{sec:expts}

To evaluate the proposed method, we utilized it in a variety of experiments described in the ensuing subsections. For the first Toy dataset, the goal was simply to test the regression capabilities of the model and visualize the uncertainty measures 2-dimensions of the well-known caloric dataset from Kaggle\footnote{\url{https://www.kaggle.com/fmendes/exercise-and-calories}}.  For the next set of experiments, we applied the method to the standard benchmark datasets commonly used to measure the quality of a regression algorithm and compare with the state-of-the-art techniques described in Section \ref{sec:priorwork}.
Next, we applied the algorithm to highly volatile, stock prices using data from 2015 to mid-year 2018  to predict the uncertainty of the stock in from mid-year 2018 till date. These uncertainty periods were shown to correspond to  different current affairs events that could have impacted that particular stock. 

Lastly, we applied the network to the complex, large-scale image dataset, the IMDb-Wiki data, to demonstrate its efficacy in performing high-quality age predictions (via regression) along with the uncertainty measures associated with the predictions. While many deep learning applications have been used successfully for different problems involving image data, there is very little work from the literature on estimating the associated uncertainties. In the different networks tested, we measure the quality of the uncertainties via the predictive log likelihood.

\subsection{Toy Caloric Datasets}\label{sec:cal}
As an initial test of our framework, we perform a one dimensional regression utilizing one parameter on a toy dataset created from Kaggle (``Exercise and Calories"). This is used purely for illustrative purposes. In addition, this will provide empirical evidence that this network is capable of capturing aleatory uncertainty (at least for basic scenarios). We attempt to determine how many calories an individual burned based on body heat. Please note that we add artificial noise to discourage the network from memorizing the mean and standard deviations of each input. 

In Figure \ref{fig:cal}, one can observe that this network successfully converges to perform the distribution regression, demonstrating that the network can capture aleatory uncertainty (the randomness inherent in the system). As this dataset is purely for demonstrative purposes, we do not include quality metrics.
\begin{figure}
    \centering
    \includegraphics[width=0.75\linewidth]{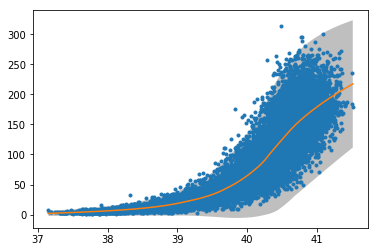}
    \caption{Regression on calories burned from body heat data. The yellow line is the regression. The gray shaded region is the $3\sigma$ confidence interval}
    \label{fig:cal}
\end{figure}

\subsection{Numerical Datasets}
\begin{table}[h]
	{\scriptsize
		\centering
		\begin{tabular}{|l|cccc|}
			\hline
			Dataset & \cite{ProbBackprop} PBP & \cite{MCDrop} MC- & \cite{Deep_ensemble} Deep  & Ours \\
			&  & Dropout & Ensembles & \\
			\hline
			Boston  & 2.57 $\pm$ 0.09 &   2.46 $\pm$ 0.25 &  2.41 $\pm$ 0.25 & \textbf{2.23 $\pm$ 0.05}\\
			Concrete & 3.16 $\pm$ 0.02  &  \textbf{3.04 $\pm$ 0.09} & \textbf{3.06 $\pm$ 0.18} & \textbf{3.05 $\pm$ 0.04}\\
			Energy & 2.04 $\pm$ 0.02 &   1.99 $\pm$ 0.09 &   \textbf{1.38 $\pm$ 0.22} & 1.91 $\pm$ 0.02\\
			Kin8nm &  -0.90  $\pm$  0.01 &  -0.95  $\pm$  0.03 &  \textbf{-1.20  $\pm$  0.02} & \textbf{-1.18 $\pm$ 0.02}\\
			Naval-  &  -3.73  $\pm$  0.01 &  -3.80  $\pm$  0.05 & \textbf{-5.63  $\pm$  0.05} & -3.82 $\pm$ 0.09\\
			propulsion &  &  &   &  \\
			Power plant & \textbf{2.84  $\pm$}  0.01 & \textbf{2.80  $\pm$  0.05 }& \textbf{2.79  $\pm$  0.04}  & \textbf{2.85 $\pm$ 0.01}\\
			Protein & 2.97  $\pm$  0.00 & 2.89  $\pm$  0.01 & 2.83  $\pm$  0.02 & \textbf{2.14 $\pm$ 0.01}\\
			Wine  & 0.97  $\pm$  0.01 & 0.93  $\pm$  0.06 & 0.94  $\pm$  0.12 & \textbf{0.87 $\pm$ 0.02}\\
			Yacht & 1.63  $\pm$ 0.02  &  1.55  $\pm$  0.12 & \textbf{1.18  $\pm$  0.21} & 4.06 $\pm$ 0.00\\
			MSD & 3.60  $\pm$  NA & v3.59  $\pm$  NA & 3.35  $\pm$  NA & 3.40 $\pm$  NA\\
			\hline		
	\end{tabular} }
	\caption{Comparison of different architectures performance for NLL on popular benchmark datasets. Measurements courtesy of Deep Ensembles paper by Lakshminarayanan et al. [7].}
	\label{tab:num}
\end{table}

 As an additional test of the framework, we demonstrate that our model is on-par or superior to other popular uncertainty quantification models (specifically PBP \cite{ProbBackprop}, MC Dropout\cite{MCDrop} and Deep Ensembles\cite{Deep_ensemble}) for regression under several benchmark datasets, commonly used to measure the quality of a regression algorithm. As can be observed in Table \ref{tab:num}, with the exception of the Yacht dataset where our technique is under-par, we performed on-par with or out-performed all other approaches  in terms of NLL (negative log-loss). 

\subsection{Uncertainty Measures on Stock Prices}
To test for uncertainty in predictions in large, complex, stochastic, highly volatile \emph{time series data}, we applied the methodology specifically on similar stocks from the entertainment industry\footnote{One of the authors spent his summer internship at a financial organization and specifically analyzed this family of stocks.}.  The family is comprised of stocks from \emph{$21^{st}$ Century Fox, Inc.~(FOX), Netflix, Inc.~(NFLX), Time Warner, Inc.~(TWX), Amazon.com, Inc.~(AMZN), Walt Disney Co.~(DIS), Comcast Corporation~(CMCSA)}. The stocks are classified as a family based on their sector, industry, asset class, and the prices of the stocks over an extended period of time being highly correlated with each other. 

Suppose we are given the stock close prices for $n$ days prior to day $T$ : $\{x_t\}_{T-n-1}^{T-1}$. We wish to predict the closing price on day $T$: $x_T$. To do this, we predict to prescribe a distribution onto $x_T \sim \mathcal{N}(\mu_T, \sigma_T)$. 
\begin{figure}[h]
	\centering
	\includegraphics[width=0.95\linewidth]{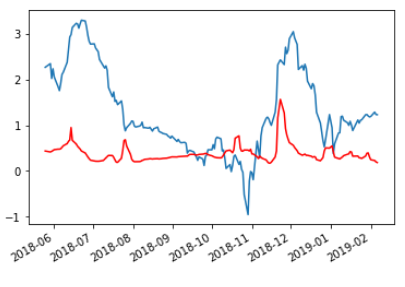}
	\caption{The blue graph is the stock price chart for {\it{FOX}} while the red graph is the measure of uncertainty estimated by the network. \scriptsize{Image is best viewed in color}}
	\label{fig:stock}
\end{figure}

\subsubsection{Data preparation and Training Schedule}
We downloaded the publicly available stock price information for the family of stocks explained above. Data for the entire family from 2015 till May 2018 was used as training data, with the goal of predicting uncertainty for only the \emph{FOX} stocks from June 2018 till date (February 2019 at the time of submission).

The network shown in Figure \ref{fig:network_arch} was implemented with the feature extraction layer being implemented as a gated recurrent unit (GRU) with a look-back of 10 days. The training scheme involved looking at the stock prices over a period of 10 days with the goal of predicting price on the 11th day along with the measure of uncertainty of the prediction. The resulting uncertainty measures are shown in Figure \ref{fig:stock}.

\subsubsection{Analysis of Results}
To analyze the uncertainties resulting from the implementation, we set a threshold of 0.5 so that days on which the uncertainty measure was above this threshold were flagged as anomalous trading days. We provide a list of \emph{FOX}-related news \footnote{News data was obtained from \url{https://www.reuters.com/finance/stocks/FOX/key-developments}. We threw away many other events leaving those related to where our uncertanties were high} in that period and compare with the anomalous days predicted by our network. The results are shown in Table \ref{tab:stocks}.
\begin{table}[h]
	{\scriptsize
		\centering
		\begin{tabular}{|c|c|l|}
			\hline
			\textbf{Real Date} & \textbf{Network predictions} & \textbf{News related to 21st Century FOX}\\
			\hline
			05-17-18 & 05-31-18 & -Suzanne Scott named CEO Of \emph{FOX} News\\
			06-13-18 & 06-15-18 & -Comcast offers to buy 21st Century Fox \\
			& & media assets for \$65B in cash\\
			10-19 till  & 10-19 till  & -Walt Disney receives unconditional approval \\
			10-20-18 & 10-22-18 & from China For 21st Century Fox deal;\\
			 & & -Amazon/Blackstone bid for Disney’s 22 \\
			 & & regional sports networks;\\
			11-26-18 & 11-26-18 & -Disney, Fox sued in U.S. for \$1B over \\
			& & Malaysia theme park \\
			01-07-19 & --- & -21st Century Fox announces filing of \\
			& & registration statement on Form 10 for Fox\\
			\hline
	\end{tabular}}
	\caption{The left column shows true dates on which major events occurred at 21st Century Fox; the second column shows the closest date estimated by our network, and the last column describes the event.}
	\label{tab:stocks}
\end{table}

\subsection{Age Estimation from Face Image}
To test how well this architecture will work on large complex datasets, we applied it on the nontrivial problem of age estimation. Given an image of a face, the network was tasked with predicting the age of the individual in the picture. Posed as a general problem, this task is a very challenging regression problem.

We utilized the IMDb-Wiki Dataset: a dataset of half a million faces scraped from both IMDb and Wikipedia (primarily IMDb), and tagged with the corresponding ages of individuals in the images. This dataset was generated by first identifying faces in images utilizing the Mathias et. al. face detector\cite{face}. The faces were then given a 40\% margin around the border and cropped out. Finally, the age was automatically extracted from the document by extracting both the time of the photograph and the year of the individual's birth.
Due to the highly automated nature of the collection of the data, this dataset is very noisy, where multiple entries in the dataset contain either no face or multiple faces. Additionally, several of the entries contain just a copyright sign. Also, in some cases, the collection year was incorrectly extracted from the webpage. See Table \ref{fig:bad} for examples of invalid face images. 

\begin{figure}
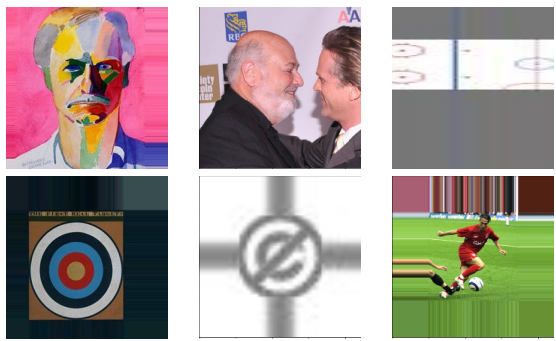

	\centering
	\begin{tabular}{ccc}
		\tablepiclarge{"image_samples/Bad/46226498_1027604394078563_7999235082512498688_n"} &
		\tablepiclarge{"image_samples/Bad/67_6145321_603558_actually_25_score_21_604"} & 
		\tablepiclarge{"image_samples/Bad/7300200_1949-06-10_1940"} \\
		\tablepiclarge{"image_samples/Bad/46403099_449840302207160_2966258797217054720_n"} &
		\tablepiclarge{"image_samples/Bad/download-15"} &
		\tablepiclarge{"image_samples/Bad/29_19400811_595427_actually_27_score_13_693"}
	\end{tabular}
	\caption{Examples of invalid data in the IMDb-Wiki dataset. Images were identified algorithmically as having extremely high uncertainty and loss values}
	\label{fig:bad}
\end{figure}

Although this dataset contained a similar distribution of males to females (see Figure \ref{fig:age_distr}(left)), it contained primarily individuals between 20 and 40 years old. Additionally, because the IMDb dataset contained a random sampling of Hollywood actors, the dataset was primarily composed of young Caucasian individuals, thus, having high implicit bias. We empirically demonstrated that our method is still capable of correctly identifying underrepresented samples in spite of the imbalances in the data.
\begin{figure}
    \centering
    \begin{tabular}{cc}
    \includegraphics[width = 0.6\linewidth]{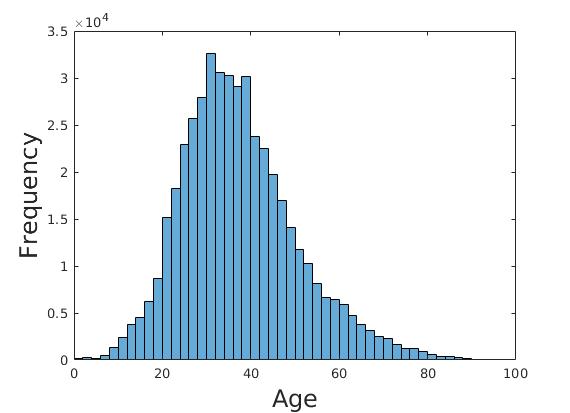} &
    \hspace*{-5mm}\raisebox{5mm}{\includegraphics[width = 0.4\linewidth]{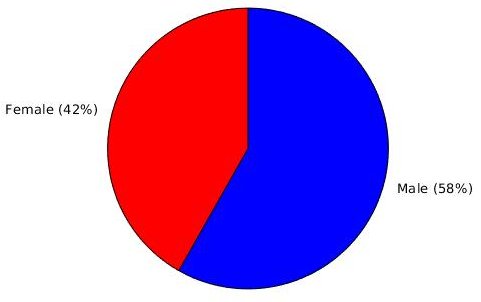}}
    \end{tabular}
    \caption{Statistics of the IMDb dataset. From the image labels provided on the left is age distribution and the right shows gender distribution.}
    \label{fig:age_distr}
\end{figure}

\subsubsection{Cleaning, Preparation and Training Schedule}
We did not wish to excessively clean the data, but rather remove the clearly wrong data. We did this by removing samples which had individuals younger than three years old or older than 100 years old. Additionally, we removed images which were too small (namely smaller than 16 by 16). We then re-sized all the images to 224 by 224.

We did \underline{not} remove invalid images which contained multiple, or no faces, although we standardize the images to be color (if they were black and white, we replicated that channel over the R, G, and B channels). Additionally, we did not remove the mislabeled entries (those which had valid ages and valid images, but were clearly mislabeled). We did not remove these so we could test the ability of the uncertainty quantification. One would expect an appropriate uncertainty quantification algorithm would give high uncertainty for invalid data or abnormal data. We exploit this later to automatically clean the dataset.

This image-based network was trained utilizing a 16-layer convolutional neural network (CNN) with an Adam optimizer, a learning rate of 0.00005 and a batch size of 8 until convergence (6 epochs).

\subsubsection{Analysis of results}
\begin{figure}[t]
	\footnotesize
	\begin{tabular}{ccccc}
		\hspace*{-3mm}
		\includegraphics[width = 0.2\linewidth]{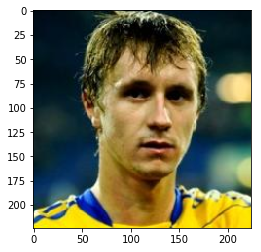}& \hspace*{-8mm}
		\includegraphics[width = 0.2\linewidth]{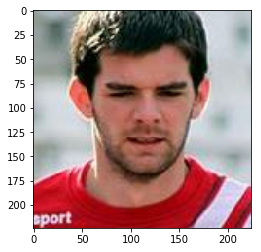} & \hspace*{-8mm}
		\includegraphics[width = 0.2\linewidth]{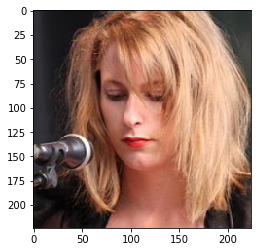} & \hspace*{-8mm}
		\includegraphics[width = 0.2\linewidth]{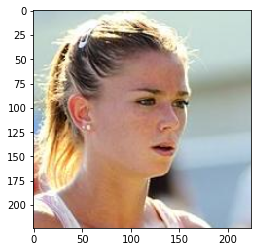} & \hspace*{-8mm}
		\includegraphics[width = 0.2\linewidth]{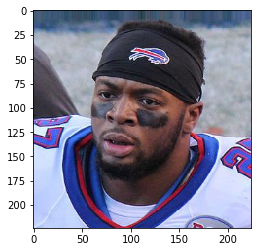}\\
		\hspace*{-3mm}
		20/19.6/3.1 & \hspace*{-3mm}
		21/22.0/3.2 & \hspace*{-3mm}
		24/22.3/7.5 & \hspace*{-3mm}
		23/23.8/7.8 & \hspace*{-4mm}
		22/22.8/3.6 \\
		\hspace*{-3mm}
		\includegraphics[width = 0.2\linewidth]{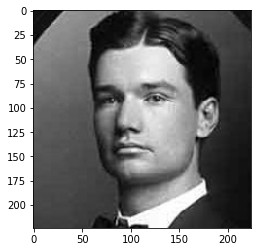} &\hspace*{-8mm}
		\includegraphics[width = 0.2\linewidth]{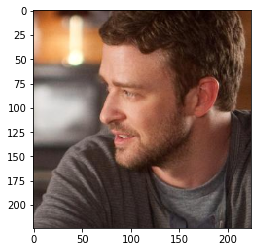} &\hspace*{-8mm}
		\includegraphics[width = 0.2\linewidth]{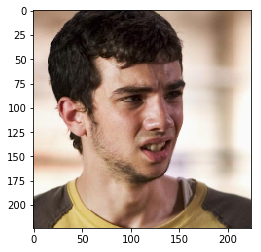} &\hspace*{-8mm}
		\includegraphics[width = 0.2\linewidth]{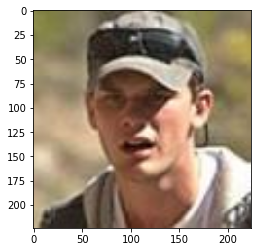} &\hspace*{-8mm}
		\includegraphics[width = 0.2\linewidth]{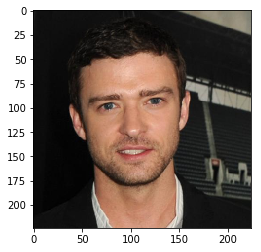}\\
		\hspace*{-3mm}
		92/26.1/7.9 & \hspace*{-3mm}
		82/30.8/6.4 & \hspace*{-3mm}
		67/27.1/5.9 & \hspace*{-3mm}
		70/25.6/6.6 & \hspace*{-3mm}
		82/35.3/7.2\\
		\hspace*{-3mm}
		\includegraphics[width = 0.2\linewidth]{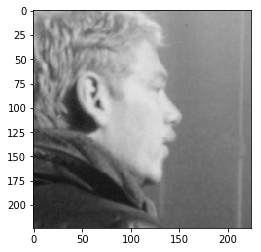} &\hspace*{-8mm}
		\includegraphics[width = 0.2\linewidth]{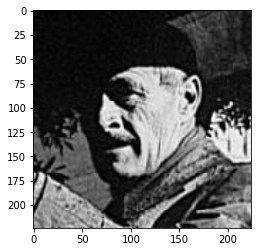} &\hspace*{-8mm}
		\includegraphics[width = 0.2\linewidth]{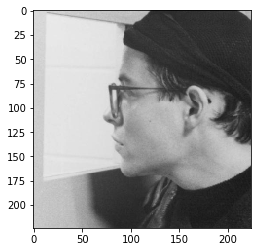} &\hspace*{-8mm}
		\includegraphics[width = 0.2\linewidth]{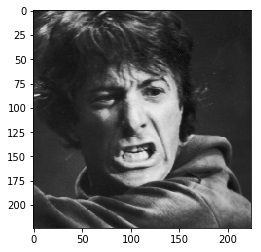} &\hspace*{-8mm}
		\includegraphics[width = 0.2\linewidth]{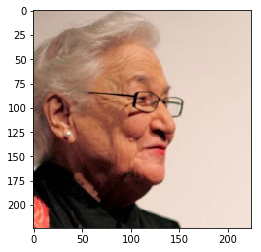}\\
		\hspace*{-3mm}
		36/55.3/19.0 & \hspace*{-3mm}
		58/62.0/19.1 & \hspace*{-3mm}
		27/69.6/21.6 & \hspace*{-3mm}
		69/76.4/20.5 & \hspace*{-3mm}
		88/83.9/20.8
	\end{tabular}
	\caption{The three numbers below each image correspond to (i) the actual age (as provided in the dataset)/(ii) the estimated age (as predicted by the regression network)/(iii) the uncertainty value reported by the network (the higher the value, the more uncertain the prediction). The top row shows some of the faces on which the network reported the lowest error values. The middle row shows the faces on which were reported the highest errors; and the last row shows the faces on which the network reported the highest uncertainty.}
	\label{tab:faceresults}
\end{figure}

\begin{table}[]
  {\footnotesize
    \centering
    \begin{tabular}{|c|c|c|}
    \hline
        \textbf{Method} & \textbf{MAE} & \textbf{NLL}\\
    \hline
        CNN + Regressor & 7.54 &  \\
    \hline
        CNN + Regressor + Uncertainty & 7.57 & 3.63 \\
    \hline
        CNN + Regressor + Uncertainty + Cleaning & 5.22 & 3.53 \\
    \hline
    \end{tabular}}
    \caption{The accuracy (both mean absolute error and negative log likelihood) of various approaches on age estimation.}
    \label{tab:face_res}
\end{table}

The results of these experiments were very promising (first row of Figure \ref{tab:faceresults}). Even on this noisy dataset, the architecture only performed poorly when the ground truth was wrong (see the middle row of Figure \ref{tab:faceresults}). These results demonstrate that our model is capable of capturing epistemic uncertainty. Additionally, this architecture's uncertainty not only expressed how confident the model was, but also how clean the data sample was. Thus, this model often reported high confidence if a sample was well represented within the dataset. Empirically we can see that difficult samples (those in a class with low representation, poor lighting, side facing faces, ambiguous individual, multiple faces) obtained high uncertainty. Images in the dataset which were incorrectly scraped along with excessively noisy or incorrect data had the highest uncertainty (see the last row of Figure \ref{tab:faceresults}). This architecture can therefore be used to evaluate the quality of samples, assuming a large portion of the data is of good quality.

\subsubsection{Determining the overhead of uncertainty quantification}
An error quantification network is often only appealing if it does not have a significant impact on performance. Thus, the quantification of the discrepancy of some error metric (say RMSE) between a classical regressor with $\omega$ parameters and that of an uncertainty-aware regressor with $\omega$ parameters should be minimized. To this end, we train two networks utilizing the same initial configuration of parameters and same number of parameters (except for the last layer) until convergence (one vanilla regressor and one error quantification regressor).

Examining Table \ref{tab:face_res}, we can observe that the discrepancy between the MAE of the uncertainty-agnostic regressor and the uncertainty-aware regressor is negligible. Thus, computing the uncertainty does not provide any significant additional overhead to this model. This final layer can therefore be added to \textit{any} regressor to provide uncertainty metrics.

\subsubsection{Automated data cleaning}
As described earlier, this architecture can be utilized to determine the quality of a sample by examining the uncertainty produced. After training, we identified the samples with the top $5\%$ uncertainty and removed them (please note that we left the validation samples unchanged). After removing these samples from our training set, we obtained significantly better results on the validation dataset (see Table \ref{tab:face_res}). Thus, this architecture is uniquely well-suited for unclean datasets to generate relatively high performing regressors.

\section{Limitations}
There are several limitations to both the uncertainty quantification method and the data cleaning process described.
\paragraph*{\sc{Uncertainty Quantification}}
While the uncertainty quantification has been demonstrated (both analytically and empirically) to appropriately quantify error, applying this direct method will likely fail if the uncertainty follows any other distribution other than normal (but this is also the case for confidence measurements in classical statistics). 

\paragraph*{\sc{Data Cleaning Process}}
While the data cleaning process was implemented successfully here and shown to benefit learning, utilizing this process can have several adverse side effects if not applied carefully. Eliminating uncertainty could increase the systematic bias of the dataset and compromise the integrity of the data. 

\section{Conclusion and Future Work}
This method of uncertainty quantification has been demonstrated to work well with large scale image datasets. Furthermore, this method has been shown to perform better than current methods for uncertainty quantification without overhead. This uncertainty quantification aspect has been exploited to develop a data cleaning procedure which improved the accuracy on an unchanged validation set. Future work for this subject includes generalizing this to arbitrary distribution regression and investigating uncertainty for classification.


\clearpage
\bibliographystyle{ieeetr}
\bibliography{egbib}
\section{Appendix}
As stated in the paper, we are learning $\rho_x(y)$ from data drawn from $p(X,Y)$

\begin{theorem}
\begin{align}
    \mathcal{L} &= -\iint\limits_S \log(\rho_{X}(Y)) p(X,Y) dS
\end{align} is minimized (under infinite i.i.d. data) when $\rho_x(y) = p(X,Y)$ if for all $X$, $p(X,Y) = \mathcal{N}(\mu_x,\sigma_x)$ for some $\mu_x, \sigma_x$.
\end{theorem}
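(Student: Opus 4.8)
The plan is to recognize $\mathcal{L}$ as a cross-entropy and to reduce the claim to the non-negativity of the Kullback–Leibler divergence (Gibbs' inequality). First I would factor the true joint density as $p(X,Y) = p(X)\,p(Y\mid X)$, where each cross-section $p(\cdot\mid x)$ is the Gaussian $\mathcal{N}(\mu_x,\sigma_x)$ by the standing hypothesis, and invoke Fubini's theorem to rewrite the double integral as an outer integral over $x$ weighted by $p(x)$, with an inner integral over $y$ taken against the conditional $p(y\mid x)$. This interchange is legitimate because both $p(\cdot\mid x)$ and $\rho_x$ are Gaussian densities, so $\log\rho_x$ grows only quadratically and the integrands are absolutely integrable.

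The key algebraic step is to add and subtract the log of the true conditional inside the inner integral:
\begin{align*}
\int p(y\mid x)\log\rho_x(y)\,dy
&= \int p(y\mid x)\log p(y\mid x)\,dy\\
&\quad - \int p(y\mid x)\log\frac{p(y\mid x)}{\rho_x(y)}\,dy.
\end{align*}
The first term equals $-H\!\big(p(\cdot\mid x)\big)$, the negative differential entropy, which does not depend on $\rho$; the second is exactly $-D_{\mathrm{KL}}\!\big(p(\cdot\mid x)\,\big\|\,\rho_x\big)$. Integrating against $p(x)$ and negating then yields
\begin{align*}
\mathcal{L} &= \int p(x)\,H\!\big(p(\cdot\mid x)\big)\,dx\\
&\quad + \int p(x)\,D_{\mathrm{KL}}\!\big(p(\cdot\mid x)\,\big\|\,\rho_x\big)\,dx,
\end{align*}
so minimizing $\mathcal{L}$ over the admissible $\rho$ is equivalent to minimizing the second, non-negative term alone.

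Finally I would apply Gibbs' inequality: $D_{\mathrm{KL}}\!\big(p(\cdot\mid x)\,\big\|\,\rho_x\big)\ge 0$ with equality if and only if $\rho_x = p(\cdot\mid x)$ almost everywhere. Hence the expected-KL term attains its infimum of zero precisely when $\rho_x$ coincides with the true conditional for $p(x)$-almost every $x$, which is the asserted minimizer. The role of the Gaussian hypothesis is to guarantee \emph{attainability} within the hypothesis class: since the network can only emit Gaussian parameters $(\mu_x,\sigma_x)$, the zero of the KL term is reachable exactly when each true cross-section $p(\cdot\mid x)$ is itself Gaussian. The main obstacle I anticipate is not the inequality but the measure-theoretic bookkeeping around it — justifying Fubini and the finiteness of the entropy and KL integrals, ruling out the degenerate limit $\sigma_x\to 0$, and phrasing the ``infinite i.i.d. data'' statement rigorously as the convergence of the empirical NLL to this population cross-entropy, so that the population minimizer is the quantity actually being approximated.
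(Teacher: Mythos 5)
Your proposal is correct, but it takes a genuinely different route from the paper. The paper argues variationally: it writes $\rho_x(y)$ as the $y$-derivative of a cumulative distribution $F^x(y)$, applies the Euler--Lagrange equation to the integrand, obtains the separable ODE $\frac{2(y-\mu_x)}{\sigma_x^2} = \frac{1}{\rho_x(y)}\frac{\partial \rho_x(y)}{\partial y}$, and integrates it under the normalization constraint $\int \rho_x(y)\,dy = 1$ to recover the Gaussian form. You instead decompose the cross-entropy as conditional entropy plus expected KL divergence and invoke Gibbs' inequality. Your route buys several things the paper's does not: it establishes that the stationary point is a \emph{global} minimum (Euler--Lagrange only identifies critical points, and the paper never checks second-order conditions); it works for an arbitrary true conditional $p(y\mid x)$, correctly isolating the Gaussian hypothesis as an attainability condition for the parametric family rather than a prerequisite for the optimality argument; and it avoids the smoothness and boundary assumptions implicit in the variational calculation. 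What the paper's route buys is that it is \emph{constructive} --- it derives the functional form of the optimizer from the stationarity condition rather than verifying a guessed answer --- and it handles the normalization constraint explicitly when solving the ODE. One further point of comparison: the paper's derivation carries an unnormalized Gaussian (the exponent lacks the factor of $\tfrac{1}{2}$), a slip your decomposition never risks because you never need to write the density out explicitly. Both arguments would benefit from the measure-theoretic care you flag at the end (Fubini, finiteness of the entropy and KL integrals, exclusion of $\sigma_x \to 0$), which the paper also omits.
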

\begin{proof}
Suppose $p(X,Y) = \mathcal{N}(\mu_x,\sigma_x)$ (i.e. for a given $x$, the $y$ is normally distributed). Then,
\begin{align}
    \mathcal{L} &= -\iint\limits_S \log(\rho_{X}(Y)) p(X,Y) dS\\
    &= -\iint\limits_S \log(\rho_{x}(y)) \frac{1}{\sqrt{2\pi\sigma_x^2}}e^{-\left(\frac{y - \mu_x}{\sigma_x}\right)^2}dy\,dx.
\end{align}
Furthermore, $\rho_X(Y) = \frac{\partial}{\partial y}F^x(y)$ for some cumulative probability distribution $F^x(y)$. Please note we notate $\frac{\partial}{\partial y}F^x(y)$ as $F^x_y(y)$ for brevity. Thus,
\begin{align}
    \mathcal{L} &= -\iint\limits_S \log(F^x_y(y))) \frac{1}{\sqrt{2\pi\sigma_x^2}}e^{-\left(\frac{y - \mu_x}{\sigma_x}\right)^2}dy\,dx.
\end{align}
By the Euler Lagrange theorem, this function reaches its minimum when
\begin{align}
    \frac{\partial \mathcal{M}}{\partial \rho} - \frac{\partial}{\partial x}\left(\frac{\partial \mathcal{M}}{\partial F^x_x}\right) - \frac{\partial}{\partial y}\left(\frac{\partial \mathcal{M}}{\partial F^x_y}\right) &= 0\\
    \frac{\partial}{\partial y}\left(\frac{\partial \mathcal{M}}{\partial F^x_y}\right) &= 0\\
    \frac{\partial^2}{\partial y\,\partial F^x_y}\left(
    \log(F^x_y(y)) \frac{1}{\sqrt{2\pi\sigma_x^2}}e^{-\left(\frac{y - \mu_x}{\sigma_x}\right)^2}
    \right)&=0
\end{align}
where $\mathcal{M}$ is the integrand of $\mathcal{L}$. Differentiating, one obtains
\begin{align}
\frac{2 \left(y-\mu _x\right) e^{-\left(\frac{y - \mu_x}{\sigma_x}\right)^2}}{\sigma_x^{2}}&=\frac{F^x_{yy}(y)
e^{-\left(\frac{y - \mu_x}{\sigma_x}\right)^2}}{F^x_y(y)}.
\end{align}
As $\rho_X(Y) = F^x_y(y)$,
\begin{align}
\frac{2 \left(y-\mu _x\right)}{\sigma_x^{2}}&=\frac{1}{\rho_x(y)} \frac{\partial\rho_x(y)}{\partial y}.
\end{align}
Thus,
\begin{align}
\int\frac{2 \left(y-\mu _x\right)}{\sigma_x^{2}}dy&=\int\frac{d\rho_x(y)}{\rho_x(y)}.
\end{align}
Solving this separable differential equation under the condition that $\int_{-\infty}^\infty\rho_x(y)dy = 1$ yields \begin{align}
F^x_y(y) =\rho_x(y) = \frac{1}{\sqrt{2\pi\sigma_x^2}}e^{-\left(\frac{y - \mu_x}{\sigma_x}\right)^2}.
\end{align}
Thus, $\rho_x(y) = p(X,Y)$ as desired.
\end{proof}
As an optimal learning scheme reaches the global optimum, under an optimal learning scheme, $\rho_x(y)$ would be learned to be $p(X,Y)$
\end{document}